\newcommand{\norm}[1]{\left\lVert#1\right\rVert}
\newtheorem{theorem}{Theorem}
\newtheorem{remark}{Remark}
\newtheorem{lemma}{Lemma}
\newtheorem{corollary}{Corollary}
\newtheorem{assumption}{Assumption}
\title{
Reinforcement Learning with Unbiased Policy Evaluation and Linear Function Approximation
}
\author{Anna Winnicki$^{1}$ and R. Srikant$^{1}$
\thanks{$^{1}$Anna Winnicki and R. Srikant are with Coordinated Science Laboratory and the Department of Electrical and Computer Engineering at
        University of Illinois at Urbana-Champaign. Srikant is also with the C3.ai Digital Transformation Institute.
        {\tt\small annaw5@illinois.edu,rsrikant@illinois.edu}.}%
}
\begin{document}

\maketitle
\thispagestyle{empty}
\pagestyle{empty}

\begin{abstract}
We provide performance guarantees for a variant of simulation-based policy iteration for controlling Markov decision processes that involves the use of stochastic approximation algorithms along with state-of-the-art techniques that are useful for very large MDPs, including lookahead, function approximation, and gradient descent. Specifically, we analyze two algorithms; the first algorithm involves a least squares approach where a new set of weights associated with feature vectors is obtained via least squares minimization at each iteration and the second algorithm involves a two-time-scale stochastic approximation algorithm taking several steps of gradient descent towards the least squares solution before obtaining the next iterate using a stochastic approximation algorithm. 
\end{abstract}

\section{Introduction}
We study the problem of controlling stochastic systems with simulation-based methods. In particular, we want to find an optimal control policy to minimize the expected cost in a discrete-time Markov decision process (MDP). By simulation-based methods, we mean methods which evaluate the performance of a policy by estimating the value function of a few states under that policy by observing trajectories of the underlying Markov chain starting from each of those states.

When the size of the state space of the Markov decision process is very large, for example in chess and Go, commonly used algorithms such as policy iteration are infeasible as they involve expensive computations at each iteration. So, modern state-of-the-art algorithms \cite{silver2016mastering, silver2017mastering,silver2017shoji} utilize simulation-based variants of policy iteration on top of techniques such as function approximation, lookahead, and gradient descent to yield fast convergence in  problems with large state spaces. See Section \ref{sectionprelimunbiased} or the work in \cite{bertsekas2019reinforcement} for more on lookahead.

The use of simulation-based variants of policy iteration has been studied in the work in \cite{tsitsiklis2002convergence} using results from stochastic approximation to guarantee convergence of the algorithms. However, the work in \cite{tsitsiklis2002convergence} assumes that it is possible to obtain unbiased estimates of value functions for each state of the state space. In this paper, we analyze variants of policy iteration where simulations of value functions corresponding to lookahead policies are only available for some of the states at each iteration.  
We consider several techniques employed in state-of-the-art techniques to improve the computational efficiency of reinforcement learning algorithms for MDPs with very large state spaces including lookahead policies, linear function approximation, and gradient descent. 

Our contributions are as follows:
\begin{itemize}
\item We study two simulation-based approximate policy iteration algorithms with lookahead that employ stochastic approximation techniques and linear function approximation. In both algorithms, at each iteration, we obtain unbiased estimates of the value function corresponding to a lookahead policy for several states. Further, it is assumed that the value function associated with each state is a linear function of a feature vector associated with the state. In the first algorithm, we use a least squares approach based on the unbiased estimates to obtain weights associated with the feature vectors to estimate the value function corresponding to the lookahead policy for all states. Then, we update the estimate of the value function in a manner similar to the work of \cite{tsitsiklis2002convergence} via the Robbins-Monro algorithm \cite{bertsekastsitsiklis, robbins1951stochastic}. In the second algorithm, instead of computing the least squares minimizer, we instead employ a two-time-scale stochastic approximation algorithm where we take several steps of gradient descent towards the solution of the least squares problem and use the result as the estimate for the value function in our stochastic approximation algorithm, the same way the minimizer is used in the least squares algorithm. For both algorithms, we derive upper bounds on the approximation error. Our performance guarantees and proofs for the least squares and gradient descent algorithms can be found in Sections \ref{sectionlsunbiased} and \ref{sectionGDunbiased}, respectively.

\item  We obtain performance guarantees for the algorithms that depend only on the amount of lookahead and the choice of feature vectors. We show that the upper bounds on the errors decrease exponentially with the amount of lookahead up to a constant factor, analogously to work of \cite{winnicki}. However, the results in this paper give tighter bounds by exploiting the unbiasedness of policy evaluation. Thus, when the feature vectors do not accurately estimate the value functions, a possible remedy involves increasing the amount of lookahead. It is important to note that the error in the algorithms depends not on the size of the state space but rather on the choice of feature vectors. 

\item As mentioned earlier, our results generalize the results in \cite{tsitsiklis2002convergence}. As we will remark later, in the special case where the standard greedy policy is used in the policy improvement step of the algorithm, instead of a multi-step lookahead, and if we do not use function approximation (i.e., what is known as the tabular case in the reinforcement learning literature), then we can recover the results in \cite{tsitsiklis2002convergence}. In the more general case considered in this paper, our results show that the asymptotic error depends on the quality of the function approximation, which can be further mitigated by using lookahead policies.
\end{itemize}

We now discuss the relationship between the results in the paper and prior work. Reinforcement learning algorithms can be broadly categorized into two classes:
\begin{itemize}
    \item Algorithms such as Q-learning and policy gradient algorithms which are designed for systems where the underlying model is unknown \cite{abadcontrol,controltsitsiklis,mehta2009q,mathkar2016distributed}.
    \item Algorithms for problems in which the model is known but the state-space is discrete and very large, which is the focus of this paper.
\end{itemize}
As mentioned earlier, the work in \cite{tsitsiklis2002convergence} considers the special case when value function estimates are obtained for each state at each iteration. The convergence analysis in \cite{tsitsiklis2002convergence} is further motivated by similar analysis for modified policy iteration \cite{Puterman1978ModifiedPI}. The role of lookahead and its relationship to Model Predictive Control (MPC) has been discussed in the recent book \cite{bertsekas2021lessons} but to the best of our knowledge, no convergence analysis of the type considered in this paper is provided there. 

The class of algorithms considered in this paper fall within the class of approximate policy iteration algorithms which have been extensively studied; see \cite{bertsekastsitsiklis, bertsekas2019reinforcement, Puterman1978ModifiedPI, scherrer,efroni2020online, tomar2020multistep, efroni2018multiplestep, 9407870}, for example. In particular, the works of \cite{Bertsekas2011ApproximatePI} and \cite{bertsekas2019reinforcement} consider the use of feature vectors in a variant of policy iteration. However, the use of gradient descent, lookahead and unbiased policy evaluation are not considered there.

Closely related to this paper is our earlier work \cite{winnicki}, which analyzes the role of linear function approximation, lookahead, and gradient descent in the convergence of modified policy iteration (also known as optimistic policy iteration) with noise, where at each iteration partial simulations of the value function corresponding to the lookahead policy are obtained for several states, i.e., the Bellman operator associated with a policy is applied several times starting at some of the states. When the Bellman operator is applied $m$ times to a vector, we have an $m$-return corresponding to the vector. Then, either the least squares problem or a step in the gradient descent towards the solution of the least squares problem is obtained to get weights associated with the feature vectors in the function approximation of the value function. The asymptotic error bounds there are weaker than those provided in this paper due to the fact that the unbiasedness of the value function estimates are not taken into account in that paper. However, the results there do not require a full simulation rollout. Thus, the results in the two papers complement each other. We also considered unbiased policy evaluation in \cite{winnicki2} but the results there are restricted to the case of MDPs with certain specific graph structures. The results here do not require such an assumption and further, consider much more general function approximation schemes and gradient descent which are not considered there.

We also note that if one performs policy evaluation with a partial rollout (also called $m$-step return), then the resulting algorithm may not have bounded errors \cite{tsitsiklisvanroy,winnicki}. In this paper, we assume a full rollout, so this issue does not arise.

\section{Preliminaries} \label{sectionprelimunbiased}
We consider a finite-state finite-action Markov decision process (MDP). We denote by $\scriptS$ our state space where $|\scriptS|$ is the size of the state space and $\scriptA$ our action space where at every state $i \in \scriptS,$ an action $u \in \scriptA$ may be taken. The probability of transitioning to state $j \in \scriptS$ from state $i \in \scriptS$ when action $u \in \scriptA$ is taken is denoted by $P_{ij}(u)$. Each time we take an action $u \in \scriptA$ at state $i \in \scriptS$, we incur a non-deterministic cost $g(i, u).$ We assume the following about our costs:
\begin{assumption}\label{assume 1} 
$g(i,u) \in[0,1]$ $\forall i,u,$ with probability 1.

\hfill $\diamond$
\end{assumption} 

A policy $\mu: \scriptS \to \scriptA$ is defined to be a mapping from the state space to the action space which prescribes an action to take when the Markov decision process reaches a particular state $i \in \scriptS.$ When a policy $\mu$ is fixed, we denote by $g_{\mu} \in \mathbb{R}^{|\scriptS|}$ the vector of expected costs associated with policy $\mu,$ i.e., $g_{\mu}(i) := E[g(i, \mu(i))].$ 
We denote by $P_\mu$ the probability transition matrix for the associated Markov chain. In other words, $P_{\mu}(i, j) := P_{ij}(\mu(i)) \forall i, j \in \scriptS$. At time $k$, we denote the state of the Markov decision process $x_k.$ For a given policy $\mu,$ we define the value function, $J^{\mu},$ with discount factor $\alpha \in (0, 1)$, component-wise as follows:
\begin{align*}
    J^{\mu}(i) := E[\sum_{k=0}^\infty \alpha^k g(x_k, \mu(x_k))|x_0 = i] \quad \forall i \in \scriptS.
\end{align*}
It is well known that $J^\mu$ can be obtained by solving the associated Bellman equation:
\begin{align*}
        J^\mu = g_\mu + \alpha P_\mu J^\mu.
\end{align*} 

For any vector $J$, we define the operator $T_\mu: \scriptS \to \scriptS$ as follows:
\begin{align*}
    T_\mu J = g_\mu + \alpha P_\mu J. 
\end{align*} Thus, the solution of the Bellman equation corresponding to policy $\mu$ is the fixed point $J^\mu$ of $J^\mu = T_\mu J^\mu.$

Our objective is to find the policy which minimizes the expected discounted cost with discount factor $\alpha \in (0, 1)$. In other words, we seek a policy $\mu$ which minimizes the following:
\begin{align*}
    E[\sum_{k=0}^\infty \alpha^k g(x_k, \mu(x_k))|x_0 = i] \quad \forall i \in \scriptS. \label{eq:objective unbiased}
\end{align*}

We call the value function associated with this policy $J^*$ and $J^*$ will be referred to as the optimal value function. That is,
\begin{align*}
    J^* := \min_\mu J^\mu.
\end{align*} 
Written differently, we have that:
\begin{align*}
    J^*(i) =  \min_\mu E[\sum_{k=0}^\infty \alpha^k g(x_k, \mu(x_k))|x_0 = i].
\end{align*}

In order to find $J^*$ and a corresponding optimal policy, we define the Bellman optimality operator $T$. When the context is understood, we use the term Bellman operator to denote $T$. Consider any vector $J$. We define the Bellman operator $T: \mathbb{R}^{|\scriptS|} \to \mathbb{R}^{|\scriptS|}$ as follows:  
\begin{align}
TJ &= \min_\mu E[ g_\mu + \alpha P_\mu J].
\end{align}
Component-wise, we have the following:
\begin{align*}
TJ(i) &= \underset{u}\min \Big [ g(i, u) + \alpha \sum_{j=1}^{|\scriptS|} P_{ij}(u)J(j) \Big ]. \label{T}
\end{align*}
The policy corresponding to the $T$ operator is defined as the \textit{greedy} policy. When applied $H$ times to vector $J \in \mathbb{R}^{|\scriptS|}$, we call the resulting operator, $T^H,$ the $H$-step ``lookahead'' corresponding to $J$. We call the greedy policy corresponding to $T^H J$ is called the $H$-step lookahead policy, or the lookahead policy, when $H$ is understood. For a formal definition of the lookahead policy, see Section \ref{sectionlsunbiased} or the work of \cite{winnicki}.

It is well known that each time the Bellman operator is applied to a vector $J$ to obtain $TJ,$ the following holds:
\begin{align*}
    \norm{TJ-J^*}_\infty\leq \alpha\norm{J-J^*}_\infty.
\end{align*} 

The Bellman equations state that $J^*$ is a solution to
\begin{align*}
J^* = TJ^*.
\end{align*}
Note that every greedy policy with respect to the optimal value function $J^*$ is optimal and vice versa \cite{bertsekastsitsiklis}. 

We now state several well known useful properties of the operators $T$ and $T_\mu$. We consider the vector $e \in \mathbb{R}^{|\scriptS|}$ where $e(i) = 1 \forall i \in 1, 2, \ldots, |\scriptS|.$ The following holds:
\begin{equation}
    T(J + ce) = TJ + \alpha ce, \quad T_\mu(J + ce) = T_\mu J + \alpha ce. \label{eq:usefulproperties}
\end{equation}
Operators $T$ and $T_\mu$ are monotone operators:
\begin{align}
    J \leq J' \implies TJ \leq TJ', \quad T_\mu J \leq T_\mu J'. \label{monotonicityproperty}
\end{align}

\section{Least Squares Algorithm}

The algorithm that we study in this section is presented in Algorithm~\ref{alg:LSUnbiased} which we discuss next.
\begin{algorithm}
\caption{Least Squares Function Approximation Algorithm With Unbiased Noise and Lookahead}
\label{alg:LSUnbiased}
\textbf{Input}: $V_0,$ feature vectors $\{ \phi(i) \}_{i \in \scriptS}, \phi(i) \in \mathbb{R}^d$  and subsets $\scriptD  \subseteq \scriptS, k = 0, 1, \ldots.$ Here $\scriptD $ is the set of states at which we evaluate the current policy at iteration $k.$\\
\begin{algorithmic}[1] 
\STATE Let $k=0$.
\STATE Let $\mu_{k+1}$ be such that $T_{\mu_{k+1}}T^{H-1}V_k = T^H V_k$.\\
\STATE Compute $\hat{J}^{\mu_{k+1}}(i) =  J^{\mu_{k+1}}(i)+w_{k}(i)$ for $i \in \scriptD .$ \\ \label{step 3 alg}
\STATE Choose $\theta_{k+1}$ to solve 
\begin{align}
    \min_\theta \sum_{i \in \scriptD} \Big( (\Phi \theta)(i) - \hat{J}^{\mu_{k+1}}(i) \Big)^2, \label{eq:step 4 alg}
\end{align} \\
where $\Phi$ is a matrix whose rows are the feature vectors.
\STATE \begin{align}
    V_{k+1} = (1-\gamma_k)V_k + \gamma_k (\Phi \theta_{k+1}). \label{eq:iteratesLSUnbiased} 
\end{align}
\STATE Set $k \leftarrow k+1.$ Go to 2.
\end{algorithmic}
\end{algorithm}

\label{sectionlsunbiased}
At every iteration, we have an iterate $V_k.$ We determine an \color{black} \textit{$H$-step lookahead policy}, \color{black}$\mu_{k+1},$ that is obtained as follows:
\begin{align*}
    T^H J_k = T_{\mu_{k+1}} T^{H-1} J_k.
\end{align*}

An $H$-step lookahead policy can be computationally expensive to implement, so an approximation called Monte Carlo Tree Search is used in practice and is known to approximate the lookahead policy well \cite{chang2005adaptive,kocisszepesvari}. 

We wish to estimate  $J^{\mu_{k+1}}$. We will obtain unbiased estimates of $J^{\mu_{k+1}}(i)$ for only a select few states $i \in \scriptD$. Our assumption on the set $\scriptD$ where $|\scriptD| <<|\scriptS|$ is discussed in more detail later in this section.  For states $i \in \scriptD$, we simulate a trajectory of $\sum_{k=0}^\infty \alpha^k g(x_k, \mu_{k+1}(x_k)),$ where $x_0 = i$ which we call $\hat{J}^{\mu_{k+1}}(i).$
Since $\hat{J}^{\mu_{k+1}}(i)$ is an unbiased estimate of $J^{\mu_{k+1}}(i),$ we write $\hat{J}^{\mu_{k+1}}(i) = J^{\mu_{k+1}}(i)+w_k(i)$ for states $i \in \scriptD.$ Note that $w_k(i)$ for $i\notin\scriptD $ does not affect the algorithm, so for convenience we define $w_k(i)=0$ for $i\notin\scriptD .$ 

We let $\scriptF_k$ be the filtration that denotes the history of the noise before $w_k$ has been determined. In other words, $\scriptF_k$ is defined as follows:
\begin{align}
    \scriptF_k := \{
(w_{k'})_{k'\leq k-1}\}. \label{eq:filtration}
\end{align}

Observe that $E[w_k|\scriptF_k]=0.$ Furthermore, from Assumption \ref{assume 1} and since $\alpha \in (0, 1),$ we have that $E[\norm{w_k}_\infty|\scriptF_k] \leq \frac{1}{1-\alpha}.$ 

We summarize the above with the following assumption:

\begin{assumption}
    $E[w_k|\scriptF_k]= 0$ and $E[\norm{w_k}_\infty|\scriptF_k] \leq \frac{1}{1-\alpha}.$ \label{assume 2}
    
\hfill $\diamond$
\end{assumption}

We associate with each state $i \in \scriptS$ a feature vector $\phi(i) \in \mathbb{R}^d$ in order to obtain an estimate of $J^{\mu_{k+1}}(i)$ for all $i \in \scriptS$. The matrix of feature vectors is $\Phi \in \mathbb{R}^{|\scriptS|\times d}$, where each row of $\Phi$ is feature vector corresponding to a state. The estimates of $J^{\mu_{k+1}}(i)$ for $i \in \scriptD$ are used to obtain estimates of $J^{\mu_{k+1}}(i)$ for all $i \in \scriptS.$ In order to obtain the estimates for states $i \notin \scriptD,$ we associate with each state $i \in \scriptS$ a feature vector $\phi(i) \in \mathbb{R}^d$. We then obtain a $\theta_{k+1}$ as follows: 
\begin{align}
    \theta_{k+1} = \min_\theta \sum_{i \in \scriptD} \Big( (\Phi \theta)(i) - \hat{J}^{\mu_{k+1}}(i) \Big)^2.
\end{align} 
Note that $\theta_{k+1}$ may be ill-defined. Thus, we make the following assumption which states that we explore a sufficient number of states during the policy evaluation phase at each iteration.

\begin{assumption}\label{assume 3} 
$ \text{ rank }\{ \phi(i)\}_{i \in \scriptD} = d$.

\hfill $\diamond$
\end{assumption}
Our estimate for $J^{\mu_{k+1}}$ is given by $\Phi \theta_{k+1}$. 

We have that $\theta_{k+1}$ can be written as:
\begin{align}
\theta_{k+1} &= (\Phi_{\scriptD }^\top \Phi_{\scriptD } )^{-1} \Phi_{\scriptD }^\top \scriptP  \hat{J}^{\mu_{k+1}},\label{eq:rewritetheta}\end{align} 
where $\Phi_{\scriptD }$ is a matrix whose rows are the feature vectors of the states in $\scriptD $ and $\scriptP $ is a matrix of zeros and ones such that $\scriptP \hat{J}^{\mu_{k+1}}$ is a vector whose elements are a subset of the elements of $\hat{J}^{\mu_{k+1}}$ corresponding to $\scriptD $.

We obtain our next iterate as follows:
\begin{align*}
    V_{k+1} &= (1-\gamma_k)V_k + \gamma_k (\Phi \theta_{k+1}).
\end{align*}
Thus, $\scriptF_k$ is the filtration that denotes the history of the algorithm up to and including the point where $V_k$ can be computed, but before $w_k$ has been determined. 

We will use standard stochastic approximation results such as those employed in Proposition 4.4 of \cite{bertsekastsitsiklis} to establish convergence of our iterates $V_k$ in Theorem \ref{UnbiasedLSThm}. For completeness, we summarize the result we use below: 

\begin{lemma}\label{Lemma 1 Unbiased} 
Let $H: \mathbb{R}^n \to \mathbb{R}^n$ be a sup norm contraction, i.e., an operator such that 
\begin{align*}
    \norm{H(x)-H(y)}_\infty \leq C\norm{x-y}_\infty,
\end{align*} where $C \in (0, 1).$ Given iterates $\{V_k\}_{k=0}^\infty,$ where $V_k \in \mathbb{R}^n,$  where each component $V_k(i)$ of $V_k$ is generated by the following:
$$V_{k+1}(i) = (1-\gamma_k(i))V_k(i) + \gamma_k(i)(H V_k(i) + w_{k}(i)), 
$$ for all $k=0, 1, \ldots,$ and $w_k(i)$ is a random noise term. The filtration for this algorithm is given by $\scriptF_k =\{ V_0(i), \ldots, V_k(i), w_0(i), \ldots, w_{k-1}(i), i = 1, \ldots, n\}$. Assume the following conditions:
	\begin{enumerate}
		\item The stepsizes $\gamma_k(i)$ are non-negative and satisfy 
		\begin{align*}
		    \sum_{k=0}^\infty \gamma_k(i) = \infty, \quad \sum_{k=0}^\infty \gamma_k^2(i) < \infty;
		\end{align*}
		\item The noise terms are unbiased conditioned on the past and their conditional variance is bounded, i.e.,
		\begin{itemize}
		    \item For every $i$ and $k$, we have $E[w_k(i)|\scriptF_k]=0.$
		    \item Given any norm $\norm{\cdot}$ on $\mathbb{R}^n,$ there exist constants $A$ and $B$ such that 
		        \begin{align*}
		            E[w_k^2(i)|\scriptF_k]\leq A+B\norm{V_k}^2, \forall i,k;
		        \end{align*}
		\end{itemize}
		\item $H: \mathbb{R}^n \to \mathbb{R}^n$ is a weighted maximum norm pseudo-contraction with fixed point $V^*,$ i.e.,
		    \begin{align*}
		        \norm{HV - V^*}_\xi \leq \alpha \norm{V-V^*}_\xi, \quad \forall V,
		    \end{align*} where $\norm{V}_\xi := \max_i \frac{|V(i)|}{\xi(i)}$ for some $\alpha \in [0, 1)$ and vector $\xi.$
	\end{enumerate}
Then, $V_k$ converges to $V^*,$ with probability 1.

\end{lemma}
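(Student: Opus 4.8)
The plan is to reduce the statement to the special case of an \emph{unweighted} sup-norm pseudo-contraction with fixed point at the origin, and then to run the classical ``squeezing'' argument for stochastic approximation with contraction mappings, as in the proof of Proposition~4.4 of \cite{bertsekastsitsiklis}. For the reduction, let $D=\mathrm{diag}(\xi)$; replacing $V_k$ by $\tilde V_k := D^{-1}V_k$ (componentwise) and $H$ by $\tilde H(x)(i) := (H(Dx))(i)/\xi(i)$, and using $\|\tilde x\|_\infty = \|Dx\|_\xi$, condition~3 says exactly that $\tilde H$ is an $\alpha$-pseudo-contraction in $\|\cdot\|_\infty$ with fixed point $\tilde V^* := D^{-1}V^*$; the recursion keeps its form with rescaled noise $\tilde w_k(i):=w_k(i)/\xi(i)$, the stepsize and zero-mean conditions are unchanged, and the conditional-variance bound survives with new constants (using $\min_i\xi(i)>0$ and equivalence of norms on $\mathbb{R}^n$). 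Subtracting $\tilde V^*$, i.e.\ setting $\bar V_k := \tilde V_k - \tilde V^*$ and $\bar H(x):=\tilde H(x+\tilde V^*)-\tilde V^*$, gives a recursion of the same form in which $\bar H$ has fixed point $0$ and $\|\bar H x\|_\infty \le \alpha\|x\|_\infty$. Since $\sum_k\gamma_k^2(i)<\infty$ forces $\gamma_k(i)\to 0$, we may also drop finitely many iterations and assume $\gamma_k(i)\in[0,1]$, so every coefficient $1-\gamma_k(i)$ is nonnegative; it then suffices to prove $\bar V_k\to 0$ a.s.

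The main argument has three ingredients. (i) Boundedness: $\sup_k\|\bar V_k\|_\infty<\infty$ a.s. (ii) Vanishing of the accumulated noise: for any process $W_{k+1}(i)=(1-\gamma_k(i))W_k(i)+\gamma_k(i)\tilde w_k(i)$ started from some time with a given value, $W_k\to 0$ a.s., because $W_k(i)$ is a weighted sum of the martingale differences $\{\tilde w_m(i)\}$ whose weights are square-summable once (i) is used to bound the conditional variances, so a martingale convergence argument applies. (iii) Squeezing: let $D_0$ be the a.s.-finite bound from (i), fix $\beta\in(\alpha,1)$, and put $D_{t+1}:=\beta D_t$, so $D_t\downarrow 0$. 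Inductively, suppose $\|\bar V_k\|_\infty\le D_t$ for all $k\ge k_t$; then for $k\ge k_t$ and each coordinate $i$ we have $\bar H\bar V_k(i)\le\|\bar H\bar V_k\|_\infty\le\alpha\|\bar V_k\|_\infty\le\alpha D_t$, so comparing $\bar V_k(i)$ with the scalar recursion $Y_{k+1}(i)=(1-\gamma_k(i))Y_k(i)+\gamma_k(i)\alpha D_t+\gamma_k(i)\tilde w_k(i)$, $Y_{k_t}(i)=D_t$, yields $\bar V_k(i)\le Y_k(i)$ for $k\ge k_t$ by nonnegativity of $1-\gamma_k(i)$. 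Writing $Y_k(i)$ as the deterministic part solving $X_{k+1}(i)=(1-\gamma_k(i))X_k(i)+\gamma_k(i)\alpha D_t$, which tends to $\alpha D_t$ because $\sum_k\gamma_k(i)=\infty$, plus an accumulated-noise part that vanishes by (ii), we get $\limsup_k\bar V_k(i)\le\alpha D_t$; a symmetric lower comparison (with $-\tilde w_k(i)$) gives $\liminf_k\bar V_k(i)\ge-\alpha D_t$. Over the finitely many coordinates this produces $k_{t+1}\ge k_t$ with $\|\bar V_k\|_\infty\le\alpha D_t+(\beta-\alpha)D_t=D_{t+1}$ for all $k\ge k_{t+1}$. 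Since $D_t\to 0$ this gives $\bar V_k\to 0$, and undoing the two changes of variables yields $V_k\to V^*$ a.s.

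I expect step (i) to be the main obstacle, for the familiar reason that the conditional-variance bound $E[\tilde w_k^2(i)\mid\scriptF_k]\le A'+B'\|\bar V_k\|_\infty^2$ involves the iterate, so one cannot invoke a clean noise-convergence result before the iterate is known to be bounded, while the natural route to boundedness is to control the noise. The standard resolution, which I would follow, is to analyze the normalized process $\bar V_k/G_k$ with $G_k:=\max\{1,\max_{m\le k}\|\bar V_m\|_\infty\}$: on any stretch on which $G_k$ is constant, the pseudo-contraction drives $\|\bar V_k\|_\infty$ toward $\alpha G_k$, whereas the noise accumulated over that stretch is a martingale whose variances, scaled by $G_k^{-2}$, are summable and hence eventually (with probability one) too small to push $G_k$ up by the factor $1/\alpha$ that would be needed for it to increase; therefore $G_k$ is eventually constant. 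Granting (i), the remaining steps are routine, and the complete argument is the one in \cite{bertsekastsitsiklis}, reproduced here only for completeness.
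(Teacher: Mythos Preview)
Your proposal is correct, but note that the paper does not actually prove this lemma: it merely states it as a summary of Proposition~4.4 in \cite{bertsekastsitsiklis} and uses it as a black box. Your sketch faithfully reproduces the argument of that proposition---the weighted-norm rescaling, the translation to fixed point $0$, the boundedness via the normalized process $\bar V_k/G_k$, and the geometric squeezing with accumulated-noise control---so there is nothing to compare; you have supplied exactly the proof the paper defers to its reference.
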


We will use a special case of Lemma \ref{Lemma 1 Unbiased} to obtain convergence bounds on $V_k$ where $\gamma_k$ does not depend on states $i \in \scriptS$, i.e., $\gamma_k(i)=\gamma_k(j) \forall i, j \in \scriptS$. To do so, we make the following assumption concerning $\gamma_k$ in equation \eqref{eq:iteratesLSUnbiased}:
\begin{assumption} \label{assume 4}
\begin{align*}\sum_{i=1}^\infty \gamma_k = \infty, \quad \sum_{i=1}^\infty \gamma_k^2 < \infty.\end{align*}

\hfill $\diamond$
\end{assumption}

Before we state the main result of this section, we present a lemma and a corollary which will be used in the proof of the theorem.
\begin{lemma}
For any policies $\tilde{\mu}$ and $\mu,$ we have the following:
\begin{align*}
    & \norm{ T_{\tilde{\mu}}\scriptM J^{\mu} - \scriptM J^{\mu}}_\infty  
    \leq  (\frac{1}{1-\alpha}+\delta_2)(1+\alpha).
\end{align*} \label{lemma 2 unbiased}
\end{lemma}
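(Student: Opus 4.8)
\textit{Proof plan.} The plan is to reduce the claim to the elementary bound that, for \emph{every} vector $X\in\mathbb{R}^{|\scriptS|}$ and every policy $\tilde{\mu}$,
\begin{align*}
    \norm{T_{\tilde{\mu}} X - X}_\infty \;\le\; (1+\alpha)\,\norm{X - J^{\tilde{\mu}}}_\infty ,
\end{align*}
and then to apply this with $X=\scriptM J^{\mu}$, controlling $\norm{\scriptM J^{\mu}-J^{\tilde{\mu}}}_\infty$ by routing it through $J^{\mu}$.

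For the elementary bound I would use two facts already available: first, $T_{\tilde{\mu}}$ is an $\alpha$-contraction in the sup norm, since $T_{\tilde{\mu}}X-T_{\tilde{\mu}}Y=\alpha P_{\tilde{\mu}}(X-Y)$ and $P_{\tilde{\mu}}$ is a transition matrix (non-negative rows summing to one); second, $J^{\tilde{\mu}}$ is the fixed point of $T_{\tilde{\mu}}$, i.e. $T_{\tilde{\mu}}J^{\tilde{\mu}}=J^{\tilde{\mu}}$. Writing $T_{\tilde{\mu}}X-X=\big(T_{\tilde{\mu}}X-T_{\tilde{\mu}}J^{\tilde{\mu}}\big)+\big(J^{\tilde{\mu}}-X\big)$ and using the triangle inequality together with the contraction property gives $\norm{T_{\tilde{\mu}}X-X}_\infty\le\alpha\norm{X-J^{\tilde{\mu}}}_\infty+\norm{X-J^{\tilde{\mu}}}_\infty$, which is the asserted $(1+\alpha)\norm{X-J^{\tilde{\mu}}}_\infty$. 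Taking $X=\scriptM J^{\mu}$ and then splitting $\norm{\scriptM J^{\mu}-J^{\tilde{\mu}}}_\infty\le\norm{\scriptM J^{\mu}-J^{\mu}}_\infty+\norm{J^{\mu}-J^{\tilde{\mu}}}_\infty$, I would bound the first term by $\delta_2$ (the worst-case linear-approximation error of the policy value functions) and the second by $\tfrac{1}{1-\alpha}$: indeed, Assumption~\ref{assume 1} and $\alpha\in(0,1)$ force $0\le J^{\mu}(i),J^{\tilde{\mu}}(i)\le\sum_{k\ge 0}\alpha^{k}=\tfrac{1}{1-\alpha}$ componentwise, so $\norm{J^{\mu}-J^{\tilde{\mu}}}_\infty\le\tfrac{1}{1-\alpha}$. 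Combining the three estimates gives precisely $(1+\alpha)\big(\tfrac{1}{1-\alpha}+\delta_2\big)$.

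I do not expect a genuine obstacle; the only thing to get right is the choice of reference vector. Measuring both $\scriptM J^{\mu}$ and $T_{\tilde{\mu}}\scriptM J^{\mu}$ against the fixed point $J^{\tilde{\mu}}$ of $T_{\tilde{\mu}}$ — rather than against $J^{\mu}$ or the zero vector — is what keeps the prefactor equal to $1+\alpha$ instead of something larger, and decomposing through $J^{\mu}$ is what introduces $\delta_2$ together with the uniform $\tfrac{1}{1-\alpha}$ bound on value functions. A cruder estimate such as $\norm{T_{\tilde{\mu}}\scriptM J^{\mu}-\scriptM J^{\mu}}_\infty\le\norm{g_{\tilde{\mu}}}_\infty+(1+\alpha)\norm{\scriptM J^{\mu}}_\infty$ would overshoot the stated constant, so the grouping above is the one I would commit to.
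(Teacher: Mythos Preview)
Your proof is correct and follows essentially the same route as the paper: both arguments insert the fixed point $J^{\tilde{\mu}}$ of $T_{\tilde{\mu}}$, use the $\alpha$-contraction of $T_{\tilde{\mu}}$ on the term $T_{\tilde{\mu}}\scriptM J^{\mu}-J^{\tilde{\mu}}$, and then route $\norm{\scriptM J^{\mu}-J^{\tilde{\mu}}}_\infty$ through $J^{\mu}$ to pick up $\delta_2$ and the uniform bound $\tfrac{1}{1-\alpha}$ on value functions. Your presentation is slightly cleaner in that you isolate the general inequality $\norm{T_{\tilde{\mu}}X-X}_\infty\le(1+\alpha)\norm{X-J^{\tilde{\mu}}}_\infty$ first, but the decomposition and constants are identical to the paper's.
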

\begin{proof}
\begin{align*}
    & \norm{ T_{\tilde{\mu}}\scriptM J^{\mu} - \scriptM J^{\mu}}_\infty  \allowdisplaybreaks\\
    \allowdisplaybreaks\\
    &=  \norm{ T_{\tilde{\mu}}\scriptM J^{\mu}-J^{\tilde{\mu}}}_\infty  +\norm{J^{\tilde{\mu}}-J^{\mu}}_\infty +\norm{J^{\mu} - \scriptM J^{\mu}}_\infty    \allowdisplaybreaks
    \\
    &\leq  \alpha\norm{ \scriptM J^{\mu}-J^{\tilde{\mu}}}_\infty  +\frac{1}{1-\alpha} +\delta_2     \allowdisplaybreaks
    \\
    &\leq \alpha\norm{ \scriptM J^{\mu}-J^{\mu}}_\infty + \alpha\norm{J^{\mu}-J^{\tilde{\mu}}}_\infty +\frac{1}{1-\alpha}  +\delta_2  \allowdisplaybreaks
    \\
    &\leq  \alpha\delta_2  +\frac{1+\alpha}{1-\alpha}+\delta_2   \allowdisplaybreaks
    \\
    &=  (\frac{1}{1-\alpha}+\delta_2)(1+\alpha).
\end{align*}
\end{proof}

\begin{corollary} \label{corollary 1 unbiased}
We note that we can trace the steps of the proof of Lemma \ref{lemma 2 unbiased} when $\scriptM = I,$ where $I$ denotes the identity matrix to obtain the following bound:
\begin{align*}
    & \norm{ T_{\tilde{\mu}} J^{\mu} -  J^{\mu}}_\infty  
    \leq  \frac{1+\alpha}{1-\alpha}.
\end{align*} 
\end{corollary}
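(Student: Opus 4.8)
The plan is to repeat, verbatim, the chain of inequalities in the proof of Lemma~\ref{lemma 2 unbiased} with $\scriptM$ replaced by the identity $I$; once this is done, every term involving $\delta_2$ collapses because $\norm{J^{\mu} - I J^{\mu}}_\infty = \norm{J^{\mu} - J^{\mu}}_\infty = 0$, and the three-term triangle-inequality split becomes a two-term split. So the bound we must land on is simply $(1+\alpha)$ times the gap between the two value functions.

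Concretely, I would first write, via the triangle inequality,
\begin{align*}
    \norm{T_{\tilde{\mu}} J^{\mu} - J^{\mu}}_\infty \leq \norm{T_{\tilde{\mu}} J^{\mu} - J^{\tilde{\mu}}}_\infty + \norm{J^{\tilde{\mu}} - J^{\mu}}_\infty,
\end{align*}
which is exactly the decomposition used in Lemma~\ref{lemma 2 unbiased} with the middle ``$\scriptM J^{\mu}$ versus $J^{\mu}$'' term now identically zero. For the first term I would use that $J^{\tilde{\mu}} = T_{\tilde{\mu}} J^{\tilde{\mu}}$ together with the fact that $T_{\tilde{\mu}} J = g_{\tilde{\mu}} + \alpha P_{\tilde{\mu}} J$ is an $\alpha$-contraction in the sup norm (since $\norm{T_{\tilde{\mu}} J - T_{\tilde{\mu}} J'}_\infty = \alpha \norm{P_{\tilde{\mu}}(J - J')}_\infty \leq \alpha \norm{J - J'}_\infty$), giving
\begin{align*}
    \norm{T_{\tilde{\mu}} J^{\mu} - J^{\tilde{\mu}}}_\infty = \norm{T_{\tilde{\mu}} J^{\mu} - T_{\tilde{\mu}} J^{\tilde{\mu}}}_\infty \leq \alpha \norm{J^{\mu} - J^{\tilde{\mu}}}_\infty.
\end{align*}
Combining the two displays yields $\norm{T_{\tilde{\mu}} J^{\mu} - J^{\mu}}_\infty \leq (1+\alpha)\norm{J^{\mu} - J^{\tilde{\mu}}}_\infty$.

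The final step is to bound $\norm{J^{\mu} - J^{\tilde{\mu}}}_\infty \leq \frac{1}{1-\alpha}$, which follows from Assumption~\ref{assume 1}: since $g(i,u) \in [0,1]$ with probability one and $\alpha \in (0,1)$, every value function obeys $0 \leq J^{\pi}(i) \leq \sum_{k=0}^{\infty} \alpha^k = \frac{1}{1-\alpha}$ componentwise for any policy $\pi$, so any two value functions differ by at most $\frac{1}{1-\alpha}$ in each coordinate. Substituting gives $\norm{T_{\tilde{\mu}} J^{\mu} - J^{\mu}}_\infty \leq \frac{1+\alpha}{1-\alpha}$, as claimed. I do not anticipate any real obstacle: the statement is an immediate specialization of Lemma~\ref{lemma 2 unbiased}, and the only point worth a second look is confirming that no hidden dependence on $\scriptM$ survives the substitution $\scriptM = I$, which it does not.
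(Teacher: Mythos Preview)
Your proposal is correct and is exactly the specialization the paper intends: you repeat the triangle-inequality decomposition and $\alpha$-contraction step from Lemma~\ref{lemma 2 unbiased} with $\scriptM=I$, so that the $\delta_2$ terms vanish, and then invoke Assumption~\ref{assume 1} to bound $\norm{J^{\mu}-J^{\tilde{\mu}}}_\infty\le\frac{1}{1-\alpha}$. This is precisely ``tracing the steps'' as the corollary instructs, with no missing ingredients.
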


We now state our theorem which characterizes the role of function approximation on the convergence of approximate policy iteration with function approximation.
\begin{theorem} \label{UnbiasedLSThm}
When Assumptions \ref{assume 1}-\ref{assume 4} are satisfied, the sequence of iterates $V_k$ generated by Algorithm \ref{alg:LSUnbiased} has the following property, with probability 1:
\begin{align*}
    \limsup_{k \to \infty} \norm{V_k-J^*}_\infty &\leq  \frac{\delta_2}{1-\alpha^{H-1}}\\& + \frac{\alpha^{H-1} \Big((1+\alpha)(\delta_2+\frac{1}{1-\alpha})\Big) }{(1-\alpha)(1-\alpha^{H-1})},
\end{align*}
where 
$\delta_2 :=  \sup_{\mu}\norm{\scriptM J^{\mu}- J^{\mu}}_\infty.$
\end{theorem}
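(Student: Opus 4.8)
The plan is to recast the recursion \eqref{eq:iteratesLSUnbiased} in a form to which the stochastic approximation result of Lemma~\ref{Lemma 1 Unbiased} applies (in the perturbed form used for Proposition~4.4 of \cite{bertsekastsitsiklis}, which tolerates a bounded error in the operator), and then to identify the effective contraction factor and the magnitude of the perturbation. Using \eqref{eq:rewritetheta}, set $\scriptM := \Phi(\Phi_{\scriptD}^\top\Phi_{\scriptD})^{-1}\Phi_{\scriptD}^\top\scriptP$, a fixed matrix under Assumption~\ref{assume 3}, so that $\Phi\theta_{k+1}=\scriptM\hat{J}^{\mu_{k+1}}=\scriptM J^{\mu_{k+1}}+\scriptM w_k$ and the update becomes
\begin{align*}
V_{k+1}=(1-\gamma_k)V_k+\gamma_k\,\scriptM J^{\mu_{k+1}}+\gamma_k\,\scriptM w_k.
\end{align*}
Under Assumptions~\ref{assume 1}--\ref{assume 2}, $w_k$ is zero-mean given $\scriptF_k$ and bounded almost surely, so $\scriptM w_k$ is a martingale difference with bounded conditional variance; together with Assumption~\ref{assume 4} on the stepsizes this is exactly the noise/stepsize structure required by Lemma~\ref{Lemma 1 Unbiased}. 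It then remains to analyze the conditional mean $\scriptM J^{\mu_{k+1}}$ as a function of $V_k$.

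First I would prove a Bellman-residual estimate: $\limsup_k\norm{TV_k-V_k}_\infty\le(1+\alpha)\big(\delta_2+\tfrac{1}{1-\alpha}\big)$. Consider the set $\mathcal{C}:=\{V:\norm{T_{\tilde\mu}V-V}_\infty\le(1+\alpha)(\delta_2+\tfrac{1}{1-\alpha})\ \text{for all }\tilde\mu\}$, which is convex because $V\mapsto\norm{T_{\tilde\mu}V-V}_\infty=\norm{g_{\tilde\mu}-(I-\alpha P_{\tilde\mu})V}_\infty$ is a norm composed with an affine map, and which contains every $\scriptM J^\mu$ by Lemma~\ref{lemma 2 unbiased}. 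Unrolling \eqref{eq:iteratesLSUnbiased}, $V_k$ minus the accumulated noise term is a convex combination of $V_0$ and the vectors $\{\scriptM J^{\mu_{j+1}}\}_{j<k}$; the weight on $V_0$ equals $\prod_{i<k}(1-\gamma_i)\to0$ since $\sum_i\gamma_i=\infty$, and the accumulated-noise term vanishes almost surely by standard stochastic approximation arguments (square-summable martingale increments). Hence the distance from $V_k$ to the bounded convex set $\mathcal{C}$ tends to $0$, and since $\norm{T_{\tilde\mu}\cdot-\cdot}_\infty$ is $(1+\alpha)$-Lipschitz, $\limsup_k\norm{T_{\tilde\mu}V_k-V_k}_\infty\le(1+\alpha)(\delta_2+\tfrac{1}{1-\alpha})$ uniformly in $\tilde\mu$, in particular for the greedy policy.

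Next I would prove a contraction-with-error estimate for the target relative to $J^*$. Since $\mu_{k+1}$ is greedy with respect to $T^{H-1}V_k$, we have $J^{\mu_{k+1}}-T^{H-1}V_k=(I-\alpha P_{\mu_{k+1}})^{-1}(T_{\mu_{k+1}}T^{H-1}V_k-T^{H-1}V_k)=(I-\alpha P_{\mu_{k+1}})^{-1}(T^HV_k-T^{H-1}V_k)$, hence
\begin{align*}
\norm{J^{\mu_{k+1}}-T^{H-1}V_k}_\infty\le\tfrac{1}{1-\alpha}\norm{T^HV_k-T^{H-1}V_k}_\infty\le\tfrac{\alpha^{H-1}}{1-\alpha}\norm{TV_k-V_k}_\infty,
\end{align*}
where the last step uses that $T^{H-1}$ is an $\alpha^{H-1}$-sup-norm contraction applied to $TV_k$ and to $V_k$. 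Combining this with $\norm{\scriptM J^{\mu_{k+1}}-J^{\mu_{k+1}}}_\infty\le\delta_2$ and with $\norm{T^{H-1}V_k-J^*}_\infty\le\alpha^{H-1}\norm{V_k-J^*}_\infty$ gives
\begin{align*}
\norm{\scriptM J^{\mu_{k+1}}-J^*}_\infty\le\alpha^{H-1}\norm{V_k-J^*}_\infty+\delta_2+\tfrac{\alpha^{H-1}}{1-\alpha}\norm{TV_k-V_k}_\infty.
\end{align*}
Finally, writing the update as $V_{k+1}=(1-\gamma_k)V_k+\gamma_k(G_kV_k+\scriptM w_k)$ with $G_kV_k:=\scriptM J^{\mu_{k+1}}$, the last two displays show $\norm{G_kV_k-J^*}_\infty\le\alpha^{H-1}\norm{V_k-J^*}_\infty+\epsilon_k$ with $\limsup_k\epsilon_k\le\delta_2+\tfrac{\alpha^{H-1}}{1-\alpha}(1+\alpha)(\delta_2+\tfrac{1}{1-\alpha})=:\bar\epsilon$; since $\alpha^{H-1}\in[0,1)$, the perturbed version of Lemma~\ref{Lemma 1 Unbiased} yields $\limsup_k\norm{V_k-J^*}_\infty\le\bar\epsilon/(1-\alpha^{H-1})$ almost surely, which is the claimed bound after expanding $\bar\epsilon$.

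The step I expect to be the main obstacle is the Bellman-residual estimate, and more precisely the careful argument that the initial-condition and accumulated-noise contributions to $V_k$ vanish so that $V_k$ is asymptotically trapped in $\mathcal{C}$; relatedly, one must be careful that the two appeals to stochastic approximation theory — one to suppress the noise for the residual bound, one for the final contraction-with-error estimate — are organized without circularity, e.g. by arguing entirely with $\limsup$'s and invoking the perturbation estimate only on a tail of the iterates where $\norm{TV_k-V_k}_\infty$ is within $\eta$ of $(1+\alpha)(\delta_2+\tfrac{1}{1-\alpha})$, then letting $\eta\to0$.
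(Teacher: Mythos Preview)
Your argument is correct and reaches the same bound, but it differs from the paper's proof in two places.

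\textbf{Bellman residual.} The paper computes $TV_{k+1}-V_{k+1}$ directly from the update rule (expanding $T_{\tilde\mu_{k+1}}$ with $\tilde\mu_{k+1}$ greedy for $V_{k+1}$) and shows that $X_k:=TV_k-V_k$ itself obeys a one-dimensional stochastic recursion $X_{k+1}\le(1-\gamma_k)X_k+\gamma_k(\text{const}\cdot e+v_k)$, to which Lemma~\ref{Lemma 1 Unbiased} is applied. You instead observe that all targets $\scriptM J^{\mu}$ lie in the convex sublevel set $\mathcal C$, unroll $V_k$ as a convex combination of $V_0$ and past targets plus a vanishing martingale, and conclude via the $(1+\alpha)$-Lipschitz continuity of the residual. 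Both yield the same $\limsup$ bound $(1+\alpha)(\delta_2+\tfrac{1}{1-\alpha})$.

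\textbf{From residual to final bound.} The paper proceeds through \emph{one-sided} monotone inequalities: from $TV_k\le V_k+ce$ it applies $T^{H-1}$ and \eqref{eq:usefulproperties}--\eqref{monotonicityproperty} to get $J^{\mu_{k+1}}\le T^{H-1}V_k+\tfrac{\alpha^{H-1}c}{1-\alpha}e$, then $\scriptM J^{\mu_{k+1}}\le T^{H-1}V_k+\kappa_\eps e$, compares the update with a recursion driven by the contraction $T^{H-1}$, and handles the lower bound separately via $J^{\mu}\ge J^*$. You use the resolvent identity $J^{\mu_{k+1}}-T^{H-1}V_k=(I-\alpha P_{\mu_{k+1}})^{-1}(T^HV_k-T^{H-1}V_k)$ together with $\norm{T^{H-1}V_k-J^*}_\infty\le\alpha^{H-1}\norm{V_k-J^*}_\infty$ to get a two-sided pseudo-contraction-with-error bound on $\norm{\scriptM J^{\mu_{k+1}}-J^*}_\infty$ and then invoke the perturbed form of Lemma~\ref{Lemma 1 Unbiased} once.

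Your route is more compact and avoids the separate upper/lower arguments; the paper's route is closer to the Tsitsiklis optimistic-PI template and uses only monotonicity of $T$ rather than the resolvent identity. Your self-flagged concern about circularity is handled exactly as you say: first establish the residual bound (this uses only that the \emph{noise} averages out, not any contraction in $V$-space), then work on a tail of iterates where $\norm{TV_k-V_k}_\infty\le(1+\alpha)(\delta_2+\tfrac{1}{1-\alpha})+\eta$, apply the perturbed lemma with error $\bar\epsilon+O(\eta)$, and let $\eta\downarrow0$; this is precisely what the paper does with its $\eps$ and $k(\eps)$.
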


\begin{proof}
Using equation \eqref{eq:rewritetheta}, we can rewrite our iterates \eqref{eq:iteratesLSUnbiased} as follows:
\begin{align}
    V_{k+1} &= (1-\gamma_k)V_k + \gamma_k (\Phi \theta_{k+1}) \nonumber\\
    &= (1-\gamma_k)V_k + \gamma_k (\underbrace{\Phi(\Phi_{\scriptD }^\top \Phi_{\scriptD } )^{-1} \Phi_{\scriptD }^\top \scriptP }_{=: \scriptM} \hat{J}^{\mu_{k+1}}) \label{eq:defscriptM}\\
    &= (1-\gamma_k)V_k + \gamma_k \Big[\scriptM (J^{\mu_{k+1}}+w_{k})\Big].\nonumber
\end{align}


We define by $\tilde{\mu}_{k+1}$ the \textit{greedy} policy corresponding to vector $J_{k+1}$. In other words,
\begin{align}
    T J_{k} = T_{\tilde{\mu}_{k+1}} J_{k}. \label{eq:itergreedy}
\end{align}
Using this notation along with Lemma \ref{lemma 2 unbiased}, we establish the bounds given in the statement of the theorem.
\begin{align*}
    &TV_{k+1} - V_{k+1} = T_{\tilde{\mu}_{k+1}} V_{k+1} - V_{k+1} \allowdisplaybreaks\\
    &= T_{\tilde{\mu}_{k+1}} \Big[(1-\gamma_k)V_k + \gamma_k (\scriptM( J^{\mu_{k+1}} + w_k))\Big] \\&- \Big[ (1-\gamma_k)V_k + \gamma_k (\scriptM (J^{\mu_{k+1}} + w_k))\Big] \allowdisplaybreaks\\
    &= g_{\tilde{\mu}_{k+1}} + \alpha P_{\tilde{\mu}_{k+1}} \Big[ (1-\gamma_k)V_k + \gamma_k (\scriptM    (J^{\mu_{k+1}} + w_k))\Big] \\&- \Big[ (1-\gamma_k)V_k + \gamma_k (\scriptM (J^{\mu_{k+1}} + w_k))\Big] \allowdisplaybreaks\\
    &= g_{\tilde{\mu}_{k+1}} + \alpha P_{\tilde{\mu}_{k+1}} (1-\gamma_k)V_k + \alpha P_{\tilde{\mu}_{k+1}} \gamma_k \scriptM J^{\mu_{k+1}}  \\&-  (1-\gamma_k)V_k - \gamma_k \scriptM J^{\mu_{k+1}} +\gamma_k(\alpha P_{\tilde{\mu}_{k+1}} - I) \scriptM w_k \allowdisplaybreaks\\
    &= (1-\gamma_k) g_{\tilde{\mu}_{k+1}} + \alpha P_{\tilde{\mu}_{k+1}} (1-\gamma_k)V_k +\gamma_k g_{\tilde{\mu}_{k+1}} \\&+ \alpha P_{\tilde{\mu}_{k+1}} \gamma_k \scriptM J^{\mu_{k+1}}  -  (1-\gamma_k)V_k - \gamma_k \scriptM J^{\mu_{k+1}} + \gamma_k v_k \allowdisplaybreaks \\
    &= (1-\gamma_k) T V_k +\gamma_k g_{\tilde{\mu}_{k+1}} + \alpha P_{\tilde{\mu}_{k+1}} \gamma_k \scriptM J^{\mu_{k+1}}\\& -  (1-\gamma_k)V_k - \gamma_k \scriptM J^{\mu_{k+1}} + \gamma_k v_k \allowdisplaybreaks\\
    &= (1-\gamma_k)(T V_k -V_k) +\gamma_k T_{\tilde{\mu}_{k+1}} \scriptM J^{\mu_{k+1}} \\&-  \gamma_k \scriptM J^{\mu_{k+1}}  +\gamma_k v_k \allowdisplaybreaks\\
    &\leq (1-\gamma_k)( T V_k-V_k)  \\&+\gamma_k \norm{ T_{\tilde{\mu}_{k+1}}\scriptM J^{\mu_{k+1}} - \scriptM J^{\mu_{k+1}}}_\infty  e   + \gamma_k v_k  \allowdisplaybreaks\\
    &\leq (1-\gamma_k)(T V_k- V_k) + \gamma_k (\frac{1}{1-\alpha}+\delta_2)(1+\alpha) e  +\gamma_k v_k,
\end{align*} 
where $v_k := (\alpha P_{\tilde{\mu}_{k+1}} - I) \scriptM w_k$ and the last inequality follows from Lemma \ref{lemma 2 unbiased}. Additionally, 
Now, define $X_k := TV_k - V_k$ and we get the following:
\begin{align*}
    X_{k+1} \leq (1-\gamma_k)X_k +\gamma_k ((\frac{1}{1-\alpha}+\delta_2)(1+\alpha)e   + v_k).
\end{align*}
 Herein, when we refer to any convergence results, we mean convergence in an almost sure sense. Applying Assumption \ref{assume 2} and Lemma \ref{Lemma 1 Unbiased} in a similar way to the work of \cite{tsitsiklis2002convergence} gives the following: 
\begin{align*}\limsup_{k \to \infty} X_k \leq (\frac{1}{1-\alpha}+\delta_2)(1+\alpha)e.
\end{align*}

Thus, for every $\eps$ we have some $k(\eps)$ such that for $k > k(\eps),$ the following holds:
\begin{align*}
    &TV_k - V_k = X_k \leq \Big((1+\alpha)(\delta_2+\frac{1}{1-\alpha}) + \eps\Big) e \\
    &\implies TV_k \leq V_k + \Big((1+\alpha)(\delta_2+\frac{1}{1-\alpha}) + \eps\Big) e \\
        &\implies T^H V_k \leq T^{H-1} V_k\\& + \alpha^{H-1}\Big((1+\alpha)(\delta_2+\frac{1}{1-\alpha}) + \eps\Big) e \\
        &\implies T_{\mu_{k+1}}T^{H-1} V_k \leq T^{H-1} V_k \\&+ \alpha^{H-1}\Big((1+\alpha)(\delta_2+\frac{1}{1-\alpha})  + \eps\Big) e \\
    &\implies J^{\mu_{k+1}} \leq T^{H-1}V_k \\&+ \frac{\alpha^{H-1} \Big((1+\alpha)(\delta_2+\frac{1}{1-\alpha})  + \eps\Big)}{1-\alpha} e \\
    &\implies \scriptM J^{\mu_{k+1}} \leq T^{H-1}V_k \\&+ \underbrace{\Bigg[\delta_2 + \frac{\alpha^{H-1} \Big((1+\alpha)(\delta_2+\frac{1}{1-\alpha}) + \eps\Big)}{1-\alpha}\Bigg]}_{=: \kappa_{\eps}} e,
\end{align*}
where the last line follows from the definition of $\delta_2$ in the statement of Theorem \ref{UnbiasedLSThm}. Thus, for $k > k(\eps),$ we have:
\begin{align*}
    V_{k+1} &= (1-\gamma_k)V_k + \gamma_k (\scriptM(J^{\mu_{k+1}} + w_k)) \\
    &\leq (1-\gamma_k)V_k + \gamma_k (T^{H-1}V_k + \kappa_{\eps} e + \scriptM w_k). 
\end{align*}
Using a similar technique to the one in \cite{tsitsiklis2002convergence} and Lemma \ref{Lemma 1 Unbiased}, we get the following:
\begin{align*}
    \limsup_{k \to \infty} V_k \leq J^* + \frac{1}{1-\alpha^{H-1}}\kappa_{\eps} e.
\end{align*}
Since the above holds for all $\eps >0,$ we get that 
\begin{align*}
    &\limsup_{k \to \infty} V_k \\&\leq J^* + \frac{1}{1-\alpha^{H-1}}\Bigg[\delta_2 + \frac{\alpha^{H-1} \Big((1+\alpha)(\delta_2+\frac{1}{1-\alpha})\Big) }{1-\alpha}\Bigg]e.
\end{align*}
Furthermore, since $J^{\mu} \geq J^*$ for all policies $\mu,$ we obtain:
\begin{align*}
    V_{k+1} \nonumber&= (1-\gamma_k)V_k + \gamma_k (\scriptM(J^{\mu_{k+1}} + w_k))  \\
   \nonumber &\geq (1-\gamma_k)V_k + \gamma_k ( J^{\mu_{k+1}} - \delta_2 e + \scriptM w_k)  \\ 
    &\geq (1-\gamma_k)V_k + \gamma_k ( J^* - \delta_2  e + \scriptM w_k). 
\end{align*} 
Thus, applying Lemma \ref{Lemma 1 Unbiased}, we get that \begin{align*}
  \liminf_{k \to \infty} V_k \geq J^* - \delta_2 e,  
\end{align*} which implies Theorem \ref{UnbiasedLSThm}. 

\end{proof}
\begin{remark}\color{black}
Observe that Theorem \ref{UnbiasedLSThm} hinges on the bound in Lemma \ref{lemma 2 unbiased} where $\tilde{\mu}$ denotes the greedy policy and $\mu$ denotes the lookahead policy. In the special case of a 1-step lookahead, i.e., where the lookahead policy corresponds to the greedy policy, and feature vectors are unnecessary with $\scriptM = I,$ where $I$ is the identity matrix, we get that  
\begin{align*}
    & \norm{ T_{\tilde{\mu}}\scriptM J^{\mu} - \scriptM J^{\mu}}_\infty  
   =  0.
\end{align*} 
Using this fact and tracing the steps of the proof of Theorem \ref{UnbiasedLSThm} gives us the result of the work in \cite{tsitsiklis2002convergence} where the iterates converge almost surely to the optimal value function.
\end{remark}\color{black}
\section{Gradient Descent Algorithm} \label{sectionGDunbiased}
Computing $\theta_{k+1}$ in \eqref{eq:step 4 alg} at every iteration $k+1$ is sometimes infeasible as the computation involves inverting a matrix at each iteration. So, we introduce our second algorithm, Algorithm \ref{alg:GDAlgUnbiased}, in which we take $\eta_k$ steps of gradient descent towards $\theta_{k+1}$ of \eqref{eq:step 4 alg} at every iteration. We require the following to hold for the sequence $\eta_k:$
\begin{assumption} \label{assume 7}
\begin{align*}
\eta_k \to \infty. \end{align*}

\hfill $\diamond$
\end{assumption} We denote by $\beta$ the stepsize employed in the gradient descent. Our gradient descent iterates are given in \eqref{eq:iterthetaGD}. 
In order to obtain bounds on the convergence of our gradient descent iterates, we assume that $\beta$ is sufficiently small: 
\begin{assumption} \label{assume 5}
\begin{align*}\alpha' := \sup_k\norm{I - \beta \Phi_{\scriptD }^\top \Phi_{\scriptD }}_2 < 1.\end{align*}

\hfill $\diamond$
\end{assumption}
Finally, we assume that the noise sequences are bounded: 
\begin{assumption} \label{assume 6}
\begin{align*}
    \norm{w_k}_\infty \leq C_w.
\end{align*}

\hfill $\diamond$
\end{assumption}

\begin{algorithm}
\caption{Gradient Descent Function Approximation Algorithm With Unbiased Noise and Lookahead}
\label{alg:GDAlgUnbiased}
\textbf{Input}: $V_0, \gamma, \{\eta_k\}_{k=0}^\infty, \beta, \{\gamma_k\}_{k=0}^\infty,$ feature vectors $\{ \phi(i) \}_{i \in \scriptS}, \phi(i) \in \mathbb{R}^d$  and subsets $\scriptD  \subseteq \scriptS, k = 0, 1, \ldots.$ Here $\scriptD $ is the set of states at which we evaluate the current policy at iteration $k.$\\
\begin{algorithmic}[1] 
\STATE Let $k=0$.
\STATE Let $\mu_{k+1}$ be such that $T^H V_k = T_{\mu_{k+1}}T^{H-1}V_k$.\\
\STATE Compute $\hat{J}^{\mu_{k+1}}(i) =  J^{\mu_{k+1}}(i)+w_{k}(i)$ for $i \in \scriptD .$ \\ 
\STATE \label{step 4 unbiased} $\theta_{k+1, 0} := 0.$ For $\ell = 1, 2, \ldots, \eta_k,$ recursively compute the following:
\begin{align}
    \theta_{k+1, \ell} &= \theta_{k+1,\ell-1} - \beta  \nabla_\theta c(\theta;\hat{J}^{\mu_{k+1}})|_{\theta_{k+1,\ell-1}}, \label{eq:iterthetaGD}
\end{align} where
\begin{align*}
    c(\theta;\hat{J}^{\mu_{k+1}}) := \frac{1}{2}\sum_{i \in \scriptD } \Big( (\Phi \theta)(i) - \hat{J}^{\mu_{k+1}}(i) \Big)^2,
\end{align*}  \\
and $\Phi$ is a matrix whose rows are the feature vectors.\\
\STATE
\begin{align*}
V_{k+1} =(1-\gamma_k) V_k +\gamma_k (\Phi\theta_{k+1, \eta_k}).
\end{align*}
\\
\STATE Set $k \leftarrow k+1.$ Go to 2.
\end{algorithmic}
\end{algorithm}

We now present our convergence guarantees for Algorithm \ref{alg:GDAlgUnbiased}.
\begin{theorem}\label{GDUnbiasedThm}
Under assumptions \ref{assume 1}-\ref{assume 6}, the iterates obtained in Algorithm $\ref{alg:GDAlgUnbiased}$ almost surely have the following property:
\begin{align*}
&\limsup_{k \to \infty} \norm{V_k - J^*}_\infty \\&\leq \frac{1}{1-\alpha^{H-1}}\Bigg[\delta_2 + \frac{\alpha^{H-1} \Big((1+\alpha)(\delta_2+\frac{1}{1-\alpha})\Big)}{1-\alpha}\Bigg],
\end{align*}
where 
$\delta_2 :=  \sup_{\mu}\norm{\scriptM J^{\mu}- J^{\mu}}_\infty.$

\end{theorem}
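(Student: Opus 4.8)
The plan is to reduce Theorem~\ref{GDUnbiasedThm} to Theorem~\ref{UnbiasedLSThm}. The only difference between Algorithms~\ref{alg:LSUnbiased} and~\ref{alg:GDAlgUnbiased} is that the exact least-squares minimizer $\theta_{k+1}$ of~\eqref{eq:step 4 alg} is replaced by the output $\theta_{k+1,\eta_k}$ of $\eta_k$ gradient-descent steps, so I would first show that this replacement introduces only an asymptotically vanishing perturbation. With $\scriptM$ as in~\eqref{eq:defscriptM} and $\theta_{k+1}$ as in~\eqref{eq:rewritetheta}, the iterate of Algorithm~\ref{alg:GDAlgUnbiased} can be written as
\begin{align*}
V_{k+1} = (1-\gamma_k)V_k + \gamma_k\big(\scriptM(J^{\mu_{k+1}} + w_k) + \varepsilon_k\big), \qquad \varepsilon_k := \Phi\big(\theta_{k+1,\eta_k} - \theta_{k+1}\big),
\end{align*}
that is, exactly the least-squares iterate of Theorem~\ref{UnbiasedLSThm} plus the extra term $\gamma_k\varepsilon_k$.

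The first substantive step is to bound $\varepsilon_k$. Since $\nabla_\theta c(\theta;\hat{J}^{\mu_{k+1}}) = \Phi_{\scriptD}^\top(\Phi_{\scriptD}\theta - \scriptP\hat{J}^{\mu_{k+1}})$, the recursion~\eqref{eq:iterthetaGD} is affine with unique fixed point $\theta_{k+1}$ (Assumption~\ref{assume 3}); subtracting the fixed-point relation and iterating from $\theta_{k+1,0}=0$ gives $\theta_{k+1,\eta_k}-\theta_{k+1} = -(I-\beta\Phi_{\scriptD}^\top\Phi_{\scriptD})^{\eta_k}\theta_{k+1}$, hence $\norm{\theta_{k+1,\eta_k}-\theta_{k+1}}_2 \le (\alpha')^{\eta_k}\norm{\theta_{k+1}}_2$ by Assumption~\ref{assume 5}. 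By Assumptions~\ref{assume 1} and~\ref{assume 6} we have $\norm{\hat{J}^{\mu_{k+1}}}_\infty \le \frac{1}{1-\alpha}+C_w$, so~\eqref{eq:rewritetheta} gives $\norm{\theta_{k+1}}_2 \le C_\theta$ for a constant $C_\theta$ not depending on $k$. Therefore $\norm{\varepsilon_k}_\infty \le \big(\max_i\norm{\phi(i)}_2\big)(\alpha')^{\eta_k}C_\theta$, which tends to $0$ deterministically because $\eta_k\to\infty$ (Assumption~\ref{assume 7}). This is precisely where Assumptions~\ref{assume 7},~\ref{assume 5}, and~\ref{assume 6}, which are not needed for Theorem~\ref{UnbiasedLSThm}, are used.

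Finally, I would retrace the proof of Theorem~\ref{UnbiasedLSThm} carrying the extra term along. Setting $X_k := TV_k - V_k$ and letting $\tilde{\mu}_{k+1}$ be a greedy policy with respect to $V_k$ (as in~\eqref{eq:itergreedy}), the same manipulation as in that proof yields
\begin{align*}
X_{k+1} \le (1-\gamma_k)X_k + \gamma_k\Big((\frac{1}{1-\alpha}+\delta_2)(1+\alpha)e + v_k + (\alpha P_{\tilde{\mu}_{k+1}}-I)\varepsilon_k\Big),
\end{align*}
with $v_k := (\alpha P_{\tilde{\mu}_{k+1}}-I)\scriptM w_k$ exactly as before and $\norm{(\alpha P_{\tilde{\mu}_{k+1}}-I)\varepsilon_k}_\infty \le (1+\alpha)\norm{\varepsilon_k}_\infty \to 0$. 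Since a deterministically vanishing perturbation of the driving term does not change the $\limsup$ delivered by Lemma~\ref{Lemma 1 Unbiased} (it can be absorbed into the $\eps$-slack already present in the least-squares argument), one again obtains $\limsup_k X_k \le (\frac{1}{1-\alpha}+\delta_2)(1+\alpha)e$ almost surely. From here the remaining chain of implications in the proof of Theorem~\ref{UnbiasedLSThm} --- propagating $TV_k\le V_k+(\cdots)e$ through $T^{H}$, using the lookahead identity $T_{\mu_{k+1}}T^{H-1}V_k = T^HV_k$, the bound $\scriptM J^{\mu_{k+1}}\le T^{H-1}V_k+\kappa_{\eps}e$, and a final application of Lemma~\ref{Lemma 1 Unbiased} to the $V_{k+1}$ recursion (now with the negligible $\varepsilon_k$ added) --- goes through verbatim and gives the claimed upper bound on $\limsup_k\norm{V_k-J^*}_\infty$; the matching lower bound $\liminf_k V_k \ge J^* - \delta_2 e$ follows exactly as in Theorem~\ref{UnbiasedLSThm}. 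I expect the main obstacle to be the bookkeeping of the second step --- producing a bound on $\norm{\theta_{k+1}}_2$ uniform in $k$ so that $\varepsilon_k\to 0$ at a $k$-independent rate --- together with stating cleanly why an asymptotically negligible, $\scriptF_{k+1}$-measurable perturbation leaves the conclusion of Lemma~\ref{Lemma 1 Unbiased} intact; everything else is a transcription of the least-squares proof.
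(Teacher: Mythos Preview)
Your proposal is correct and follows essentially the same strategy as the paper: show that $\Phi\theta_{k+1,\eta_k}$ differs from the exact least-squares output $\Phi\hat\theta_{k+1}=\scriptM\hat J^{\mu_{k+1}}$ by a deterministically vanishing amount (using Assumptions~\ref{assume 7},~\ref{assume 5},~\ref{assume 6} exactly as you do), and then rerun the $X_k=TV_k-V_k$ analysis of Theorem~\ref{UnbiasedLSThm} with this extra $o(1)$ term carried along. The only organizational difference is that the paper absorbs the gradient-descent error into the constant (writing $\delta_2+\eps_1$ in place of $\delta_2$) and applies Corollary~\ref{corollary 1 unbiased} to $J^{\mu_{k+1}}$ directly, whereas you keep $\scriptM J^{\mu_{k+1}}$, apply Lemma~\ref{lemma 2 unbiased} as in Theorem~\ref{UnbiasedLSThm}, and track $\varepsilon_k$ as a separate perturbation; both bookkeepings yield the identical final bound.
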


\begin{proof}
We define $\hat{\theta}_{k+1}$ as follows:
\begin{align*}
    \hat{\theta}_{k+1} &:= \arg\min_{\theta} c(\theta;\hat{J}^{\mu_{k+1}}) \\
    &= (\Phi_{\scriptD }^\top \Phi_{\scriptD } )^{-1} \Phi_{\scriptD }^\top \scriptP  \hat{J}^{\mu_{k+1}} \\
    &= (\Phi_{\scriptD }^\top \Phi_{\scriptD } )^{-1} \Phi_{\scriptD }^\top \scriptP  (J^{\mu_{k+1}}+w_k).
\end{align*} Note that the above term is well-defined due to Assumption \ref{assume 3}. Additionally, note that 
\begin{align}
    \Phi \hat{\theta}_{k+1} = \scriptM  \hat{J}^{\mu_{k+1}} + \scriptM w_k,\label{eq:phihatthetaunbiased}
\end{align} from the definition of $\scriptM$ in equation \eqref{eq:defscriptM}.

We have that for every $k+1,$ $\theta_{k+1,\eta_k}$ is obtained by taking $\eta_k$ steps of gradient descent towards  $\hat{\theta}_{k+1}$ beginning from $\theta_{k+1,0} =0$. We show that the following holds for all $\ell \in 1, 2, \ldots, \eta_k$:
\begin{align*}
\norm{\theta_{k+1,\ell} - \hat{\theta}_{k+1}}_\infty \leq \alpha' \norm{\theta_{k+1,\ell-1} - \hat{\theta}_{k+1}}_\infty,
\end{align*} where $\alpha'$ is defined in Assumption \ref{assume 5}.

Recall that for a fixed $k+1,$ the iterates in Equation \eqref{eq:iterthetaGD} can be written as follows:
\begin{align*}
\theta_{k+1,\ell} &= \theta_{k+1,\ell-1} - \beta \nabla_\theta c(\theta;\hat{J}^{\mu_{k+1}})|_{\theta_{k+1,\ell-1}} \\&= \theta_{k+1,\ell-1} - \beta \Big( \Phi_{\scriptD }^\top \Phi_{\scriptD } \theta_{k+1,\ell-1}\\& - \Phi_{\scriptD }^\top \scriptP(J^{\mu_{k+1}}+w_k)\Big).
\end{align*}

Since 
\begin{align*}0 &= \nabla_\theta c(\theta;\hat{J}^{\mu_{k+1}})|_{\hat{\theta}_{k+1}}\\&= \Phi_{\scriptD }^\top \Phi_{\scriptD } \hat{\theta}_{k+1} - \Phi_{\scriptD }^\top \scriptP(J^{\mu_{k+1}}+w_k),
\end{align*}
we have the following:
\begin{equation*}
\begin{array}{lll}
\theta_{k+1,\ell} &=& \theta_{k+1,\ell-1} - \beta \Big( \Phi_{\scriptD }^\top \Phi_{\scriptD } \theta_{k+1,\ell-1} \\&&- \Phi_{\scriptD }^\top \Phi_{\scriptD } \hat{\theta}_{k+1} - \Phi_{\scriptD }^\top \scriptP(J^{\mu_{k+1}}+w_k) \\
&&+ \Phi_{\scriptD }^\top \scriptP(J^{\mu_{k+1}}+w_k)\Big)    \\
&=& \theta_{k+1,\ell-1} - \beta \Phi_{\scriptD }^\top \Phi_{\scriptD } (\theta_{k+1,\ell-1} - \hat{\theta}_{k+1}).
\end{array}
\end{equation*}
Subtracting $\hat{\theta}_{k+1}$ from both sides gives:
\begin{align*}
&\theta_{k+1,\ell} - \hat{\theta}_{k+1} \\&=  \theta_{k+1,\ell-1} - \hat{\theta}_{k+1} - \beta \Phi_{\scriptD }^\top \Phi_{\scriptD } (\theta_{k+1,\ell-1}- \hat{\theta}_{k+1})\\&= (I - \beta \Phi_{\scriptD }^\top \Phi_{\scriptD }) (\theta_{k+1,\ell-1} - \hat{\theta}_{k+1}) \\
\implies  &\Phi\theta_{k+1,\ell} -\Phi \hat{\theta}_{k+1} \\&= (I - \beta \Phi_{\scriptD }^\top \Phi_{\scriptD }) (\Phi \theta_{k+1,\ell-1} - \Phi\hat{\theta}_{k+1}).
\end{align*}

Thus, 
\begin{align*}
\nonumber&\norm{\Phi\theta_{k+1,\ell} - \Phi\hat{\theta}_{k+1}}_\infty \nonumber \\&=  \norm{(I - \beta \Phi_{\scriptD }^\top \Phi_{\scriptD }) (\Phi\theta_{k+1,\ell-1} - \Phi\hat{\theta}_{k+1})}_\infty \\&=  \norm{I - \beta \Phi_{\scriptD }^\top \Phi_{\scriptD }}_\infty \norm{\Phi \theta_{k+1,\ell-1} - \Phi\hat{\theta}_{k+1}}_\infty \\ \nonumber
&\leq  \norm{I - \beta \Phi_{\scriptD }^\top \Phi_{\scriptD }}_2 \norm{\Phi \theta_{k+1,\ell-1} - \Phi\hat{\theta}_{k+1}}_\infty\\& \leq \underbrace{\sup_k\norm{I - \beta \Phi_{\scriptD }^\top \Phi_{\scriptD }}_2}_{= \alpha'} \norm{\Phi \theta_{k+1,\ell-1} - \Phi\hat{\theta}_{k+1}}_\infty, 
\end{align*} where $\alpha'$ is defined in Assumption \ref{assume 5}.

Thus, 
\begin{align}
\nonumber\norm{\Phi\theta_{k+1,\eta_k} - \Phi\hat{\theta}_{k+1}}_\infty \nonumber &\leq \alpha'^{\eta_k} \norm{\Phi \theta_{k+1,0} - \Phi\hat{\theta}_{k+1}}_\infty \nonumber\\
&=\alpha'^{\eta_k} \norm{ \Phi\hat{\theta}_{k+1}}_\infty \label{eq:iternu},
\end{align} where the equality follows from the definition of $\theta_{k+1,0}$ in Step \ref{step 4 unbiased} of Algorithm \ref{alg:GDAlgUnbiased}.

We will now bound $\norm{ \Phi\hat{\theta}_{k+1}}_\infty$ for all $k+1$.

First, we note that from the definitions of $\scriptM$ and $\delta_2$ in \eqref{eq:defscriptM} and Theorem \ref{UnbiasedLSThm}, respectively, that the following holds: 

The following holds:
\begin{align*}
 \norm{\Phi\hat{\theta}_{k+1}}_\infty &= \norm{\Phi (\Phi_{\scriptD }^\top \Phi_{\scriptD } )^{-1} \Phi_{\scriptD }^\top \scriptP  (J^{\mu_{k+1}}+w_k)}_\infty \\
 &= \norm{\scriptM (J^{\mu_{k+1}}+w_k)}_\infty \\
 &\leq \norm{\scriptM J^{\mu_{k+1}}}_\infty + \norm{\scriptM w_k}_\infty \\
 &\leq \norm{\scriptM J^{\mu_{k+1}}-J^{\mu_{k+1}}+J^{\mu_{k+1}}}_\infty+ C_w \norm{\scriptM}_\infty \\
 &\leq \delta_2 + \frac{1}{1-\alpha} + C_w \norm{\scriptM}_\infty,
\end{align*} where the second to last inequality follows from Assumption \ref{assume 7}.

From \eqref{eq:iternu}, we get that:
\begin{align}
\nonumber\norm{\Phi\theta_{k+1,\eta_k} -\Phi \hat{\theta}_{k+1}}_\infty \nonumber 
&\leq \alpha'^{\eta_k} \Bigg(\delta_2 + \frac{1}{1-\alpha} + C_w \norm{\scriptM}_\infty \Bigg).
\end{align}

Thus, under Assumption \ref{assume 6}, for any $\eps_1 >0,$ we get that there exists some $k(\eps_1)$ such that for all $k > k(\eps_1)$:
\begin{align}
\nonumber\norm{\Phi\theta_{k+1,\eta_k} -\Phi \hat{\theta}_{k+1}}_\infty \nonumber &\leq \eps_1.
\end{align}

Now, we have the following for $k > k(\eps_1)$ using :
\begin{align*}
    &\norm{\Phi \theta_{k+1, \eta_k} - (J^{\mu_{k+1}}+\scriptM w_k)}_\infty \\&=\norm{\Phi \theta_{k+1, \eta_k} -\Phi \hat{\theta}_{k+1}+\Phi \hat{\theta}_{k+1}- (J^{\mu_{k+1}}+\scriptM w_k)}_\infty \\
    &= \norm{\Phi \theta_{k+1, \eta_k} -\Phi \hat{\theta}_{k+1}}_\infty +\norm{\Phi \hat{\theta}_{k+1}- (J^{\mu_{k+1}}+\scriptM w_k)}_\infty\\
    &\leq \eps_1  +\norm{\scriptM(J^{\mu_{k+1}}+ w_k)- (J^{\mu_{k+1}}+\scriptM w_k)}_\infty\\&
    \leq \eps_1 + \delta_2.
\end{align*}

When we denote $\tilde{\mu}_{k+1}$ as the greedy policy in \eqref{eq:itergreedy}, we have the following:
\begin{align*}
    &TV_{k+1} - V_{k+1} \\&= T_{\tilde{\mu}_{k+1}} V_{k+1} - V_{k+1} \allowdisplaybreaks\\ 
    &= T_{\tilde{\mu}_{k+1}} \Big[ (1-\gamma_k) V_k +\gamma_k (\Phi\theta_{k+1, \eta_k})\Big]\\&-\Big[(1-\gamma_k) V_k +\gamma_k (\Phi\theta_{k+1, \eta_k})\Big] \allowdisplaybreaks\\ 
    &= g_{\tilde{\mu}_{k+1}} + \alpha P_{\tilde{\mu}_{k+1}} (1-\gamma_k)V_k \\&+ \alpha P_{\tilde{\mu}_{k+1}} \gamma_k \Phi\theta_{k+1, \eta_k}  -  (1-\gamma_k)V_k - \gamma_k \Phi\theta_{k+1, \eta_k}\allowdisplaybreaks\\
    &= (1-\gamma_k)g_{\tilde{\mu}_{k+1}} + \alpha P_{\tilde{\mu}_{k+1}} (1-\gamma_k)V_k + \gamma_k g_{\tilde{\mu}_{k+1}}\\&+ \alpha P_{\tilde{\mu}_{k+1}} \gamma_k \Phi\theta_{k+1, \eta_k}  -  (1-\gamma_k)V_k  - \gamma_k \Phi\theta_{k+1, \eta_k}\allowdisplaybreaks \\ 
    &= (1-\gamma_k)(T V_k -V_k) +\gamma_k g_{\tilde{\mu}_{k+1}} + \gamma_k \alpha P_{\tilde{\mu}_{k+1}}  \Phi\theta_{k+1, \eta_k} \\& -  \gamma_k  \Phi\theta_{k+1, \eta_k} \allowdisplaybreaks\\ 
    &\leq  (1-\gamma_k)(T V_k -V_k) +\gamma_k g_{\tilde{\mu}_{k+1}}\\& + \gamma_k \alpha P_{\tilde{\mu}_{k+1}} (J^{\mu_{k+1}}+\scriptM w_k +(\delta_2+\eps_1)e)  \\&-  \gamma_k  (J^{\mu_{k+1} }+\scriptM w_k  - (\delta_2 + \eps_1)e)\allowdisplaybreaks \\ 
    &\leq  (1-\gamma_k)(T V_k -V_k) +\gamma_k T_{\tilde{\mu}_{k+1}} J^{\mu_{k+1}}-\gamma_k J^{\mu_{k+1}}\\&+\gamma_k  (\alpha P_{\tilde{\mu}_{k+1}}-I)\scriptM w_k +\gamma_k(  \alpha P_{\tilde{\mu}_{k+1}}+I)(\delta_2+\eps_1)\allowdisplaybreaks \\ 
    &\leq  (1-\gamma_k)(T V_k -V_k) +\gamma_k \norm{T_{\tilde{\mu}_{k+1}} J^{\mu_{k+1}}- J^{\mu_{k+1}}}_\infty e\\&+\gamma_k  (\alpha P_{\tilde{\mu}_{k+1}}-I)\scriptM w_k +\gamma_k (\delta_2+\eps_1)\norm{  \alpha P_{\tilde{\mu}_{k+1}}+I}_\infty e\allowdisplaybreaks \\ 
    &\overset{(a)}{\leq}(1-\gamma_k)(T V_k -V_k) +\gamma_k  v_k \\&+\gamma_k (1+\alpha)(\delta_2 +\eps_1+\frac{1}{1-\alpha})e
, 
\end{align*} where $v_k := (\alpha P_{\tilde{\mu}_{k+1}}-I)\scriptM w_k$ and $(a)$ follows from Corollary \ref{corollary 1 unbiased}.

Now, define $X_k = TV_k - V_k$ and we get the following:
\begin{align*}
    X_{k+1} \leq (1-\gamma_k)X_k +\gamma_k ((1+\alpha)(\delta_2+\eps_1+\frac{1}{1-\alpha})e  + v_k).
\end{align*}
Thus, using Assumption \ref{assume 2} and Lemma \ref{Lemma 1 Unbiased}, it can be easily shown that
\begin{align*}
    \limsup_{k \to \infty} X_k \leq (1+\alpha)(\delta_2+\eps_1+\frac{1}{1-\alpha})e .
\end{align*}

Thus, for every $\eps_2 < \eps_1,$ we have some $k(\eps_2) > k(\eps_1)$ such that for $k > k(\eps_2),$ the following holds:
\begin{align*}
    &TV_k - V_k = X_k \leq \Big((1+\alpha)(\delta_2+\eps_1+\frac{1}{1-\alpha})+ \eps_2\Big) e \\ \allowdisplaybreaks
    &\implies TV_k \leq V_k + \Big((1+\alpha)(\delta_2+\eps_1+\frac{1}{1-\alpha}) + \eps_2\Big) e \\ \allowdisplaybreaks
        &\implies T^H V_k \leq T^{H-1} V_k\\& + \alpha^{H-1}\Big((1+\alpha)(\delta_2+\eps_1+\frac{1}{1-\alpha})e + \eps_2\Big) e \\ \allowdisplaybreaks
        &\implies T_{\mu_{k+1}}T^{H-1} V_k \leq T^{H-1} V_k \\&+ \alpha^{H-1}\Big((1+\alpha)(\delta_2+\eps_1+\frac{1}{1-\alpha})e + \eps_2\Big) e \\ \allowdisplaybreaks
    &\implies J^{\mu_{k+1}} \leq T^{H-1}V_k \\&+ \frac{\alpha^{H-1} \Big((1+\alpha)(\delta_2+\eps_1+\frac{1}{1-\alpha})e + \eps_2\Big)}{1-\alpha} e \\ \allowdisplaybreaks
    &\implies  \Phi\theta_{k+1,\eta_k} \leq T^{H-1}V_k + v_k \\&+ \underbrace{\Bigg[\delta_2+\eps_1 + \frac{\alpha^{H-1} \Big((1+\alpha)(\delta_2+\eps_1+\frac{1}{1-\alpha}) + \eps_2\Big)}{1-\alpha}\Bigg]}_{=: \kappa_{\eps_2,\eps_1}} e. 
\end{align*}

Thus, for $k > k(\eps_2),$ we have:
\begin{align}
    V_{k+1} &\leq (1-\gamma_k)V_k + \gamma_k (T^{H-1}V_k + \kappa_{\eps_2,\eps_1} e + v_k). 
\end{align}
Applying Lemma \ref{Lemma 1 Unbiased}, we get the following:
\begin{align*}
    \limsup_{k \to \infty} V_k \leq J^* + \frac{\kappa_{\eps_2,\eps_1}}{1-\alpha^{H-1}} e.
\end{align*}
Since the above holds for all $\eps_1, \eps_2 >0,$ we get that 
\begin{align*}
    &\limsup_{k \to \infty} V_k \leq \\&J^* + \frac{1}{1-\alpha^{H-1}}\Bigg[\delta_2 + \frac{\alpha^{H-1} \Big((1+\alpha)(\delta_2+\frac{1}{1-\alpha})\Big)}{1-\alpha}\Bigg]e.
\end{align*}



Furthermore, since $J^{\mu} \geq J^*$ for all policies $\mu,$ we obtain the following for $k > k(\eps_1)$:
\begin{align*}
    V_{k+1} \nonumber&= (1-\gamma_k) V_k +\gamma_k (\Phi \theta_{k+1, \eta_k}) \\
    &\geq (1-\gamma_k)V_k + \gamma_k ( J^{\mu_{k+1}} - \delta_2  e -\eps_1 + v_k) \\
    &\geq (1-\gamma_k)V_k + \gamma_k ( J^* - \delta_2   e -\eps_1 + v_k).
\end{align*} 
Thus, since the above holds for all $\eps_1>0,$ applying Lemma \ref{Lemma 1 Unbiased}, \begin{align*} 
    \liminf_{k \to \infty} V_k \geq J^* - \delta_2 e,
\end{align*} and we get Theorem \ref{GDUnbiasedThm}.
\end{proof}






\section{Conclusion}

We study the convergence of function approximation based approximate policy iteration algorithms with stochastic approximation techniques when lookahead and gradient descent are involved. The upper bounds on asymptotic error decrease exponentially with increasing amount of lookahead and depend only on the feature vectors used for function approximation rather than the size of the state space. Additionally, while we assume that the noise must be bounded and unbiased, our upper bounds for asymptotic error do not depend on the nature of the noise. We outline several directions for further work:
\begin{itemize}
    \item In our main proofs, we assume that we obtain $d$ trajectories beginning at states in $\scriptD$ at every iteration $k$ to estimate $J^{\mu_k}$. It is possible to generalize this result in several directions: (i) it is easy to relax this assumption to allow the set of states at which we evaluate the policy to vary with each iteration if we assume that these states are chosen independent of past history. (ii) A more practical assumption would be one in which the states are simply the states visited by one trajectory under the current policy. Our results can also be extended to this with additional terms in the error bounds which can be controlled using multi-step lookahead. 
    \item In the gradient descent algorithm, we have assumed that $\eta_k\rightarrow\infty.$ An alternative is to fix $\eta_k$ to be a constant and obtain asymptotic error bounds. This is a straightforward extension of our results.
    \item We have obtained our results with the use of linear function approximators. However, it has recently been suggested in works on the NTK analysis of neural networks that neural networks could be approximated as linear combinations of basis functions. One direction of further work could involve an extension of the current work to include the use of neural networks in addition to the linear function approximators \cite{ji2019polylogarithmic,satpathi}.
\end{itemize} 

\section*{ACKNOWLEDGMENT}
The research presented here was supported by the following grants: ONR N00014-19-1-2566, NSF CCF 17-04970, NSF CCF 1934986, and ARO W911NF-19-1-0379.

\bibliographystyle{plain} 
\bibliography{aaai22.bib} 


\end{document}